\newcommand{\Real}{\mathbb{R}}
\newcommand{\ppp}{\mathbf{p}}
\newcommand{\xxx}{\mathbf{x}}
\newcommand{\aaa}{\mathbf{a}}
\renewcommand{\ggg}{\mathbf{g}}
\newcommand{\zzz}{\mathbf{z}}
\newcommand{\ooo}{\mathbf{o}}
\newtheorem{theorem}{Theorem}
\newcommand{\vneg}{\vspace{-10pt}}
\title{\LARGE \bf Decentralized and Communication-Free Multi-Robot \\
Navigation through Distributed Games}
\author{
    Brian Reily$^1$, Terran Mott$^2$, and Hao Zhang$^2$%
    \thanks{$^1$: Brian Reily is affiliated with the DEVCOM Army Research
    Laboratory through ORAU. email: breily@mines.edu}
    \thanks{$^2$:Terran Mott and Hao Zhang are with 
    the Human-Centered Robotics Lab 
    in the Department of Computer Science
    at the Colorado School of Mines, Golden, CO, 80401.
    email: \{terranmott, hzhang\}@mines.edu.}%
}
\begin{document}
\maketitle
\thispagestyle{empty}
\pagestyle{empty}

\begin{abstract}

Effective multi-robot teams require the ability to move to goals in complex environments in order to address real-world applications such as search and rescue. Multi-robot teams should be able to operate in a completely decentralized manner, with individual robot team members being capable of acting without explicit communication between neighbors. In this paper, we propose a novel game theoretic model that enables decentralized and communication-free navigation to a goal position. Robots each play their own distributed game by estimating the behavior of their local teammates in order to identify behaviors that move them in the direction of the goal, while also avoiding obstacles and maintaining team cohesion without collisions. We prove theoretically that generated actions approach a Nash equilibrium, which also corresponds to an optimal strategy identified for each robot. We show through extensive simulations that our approach enables decentralized and communication-free navigation by a multi-robot system to a goal position, and is able to avoid obstacles and collisions, maintain connectivity, and respond robustly to sensor noise.

\end{abstract}

\section{Introduction}
\label{sec:intro}

Multi-robot teams are a viable solution for a variety of applications 
that involve distributed or simultaneous tasks, such as 
disaster response \cite{erdelj2017wireless},
area exploration \cite{bayat2017environmental}, 
and search and rescue \cite{goldhoorn2018searching}.
To effectively address such applications, robots must be able to move
through complex environments 
and arrive at waypoints and goal positions \cite{manderson2020vision}.
While planning its own navigation actions, each robot must 
avoid obstacles as well as collisions with its teammates who are 
making their own movements.
Accordingly, each robot needs the capability to anticipate both
the actions of their local teammates and 
how their own actions will affect their neighbors.

In order to ensure the resilience of multi-robot teams,
robots must be able to work in a decentralized manner in the real world
\cite{smyrnakis2014coordination,banfi2018optimal}.
Complex environments can be dangerous, and relying on a single robot
to provide centralized control can cause an entire multi-robot system
to fail due to the failure of only that single robot \cite{luo2019minimum}.
Decentralized systems are resilient to this, continuing to operate
despite failure of individual robots \cite{li2020resilient}.
In addition, in a resilient multi-robot system robots should be able
to operate without requiring the capability to communicate with their
teammates. 
Multi-robot teams are often composed of low-cost robots that may 
lack expensive communication capabilities, or multi-robot teams may
lose members due to obstacles and adversaries that interfere with
communication.
Together, enabling decentralized behaviors
and allowing robots to operate without communication to their 
neighbors is vital for the success of the multi-robot team.

\begin{figure}[t]
    \vneg
    \centering
    \includegraphics[width=0.41\textwidth]{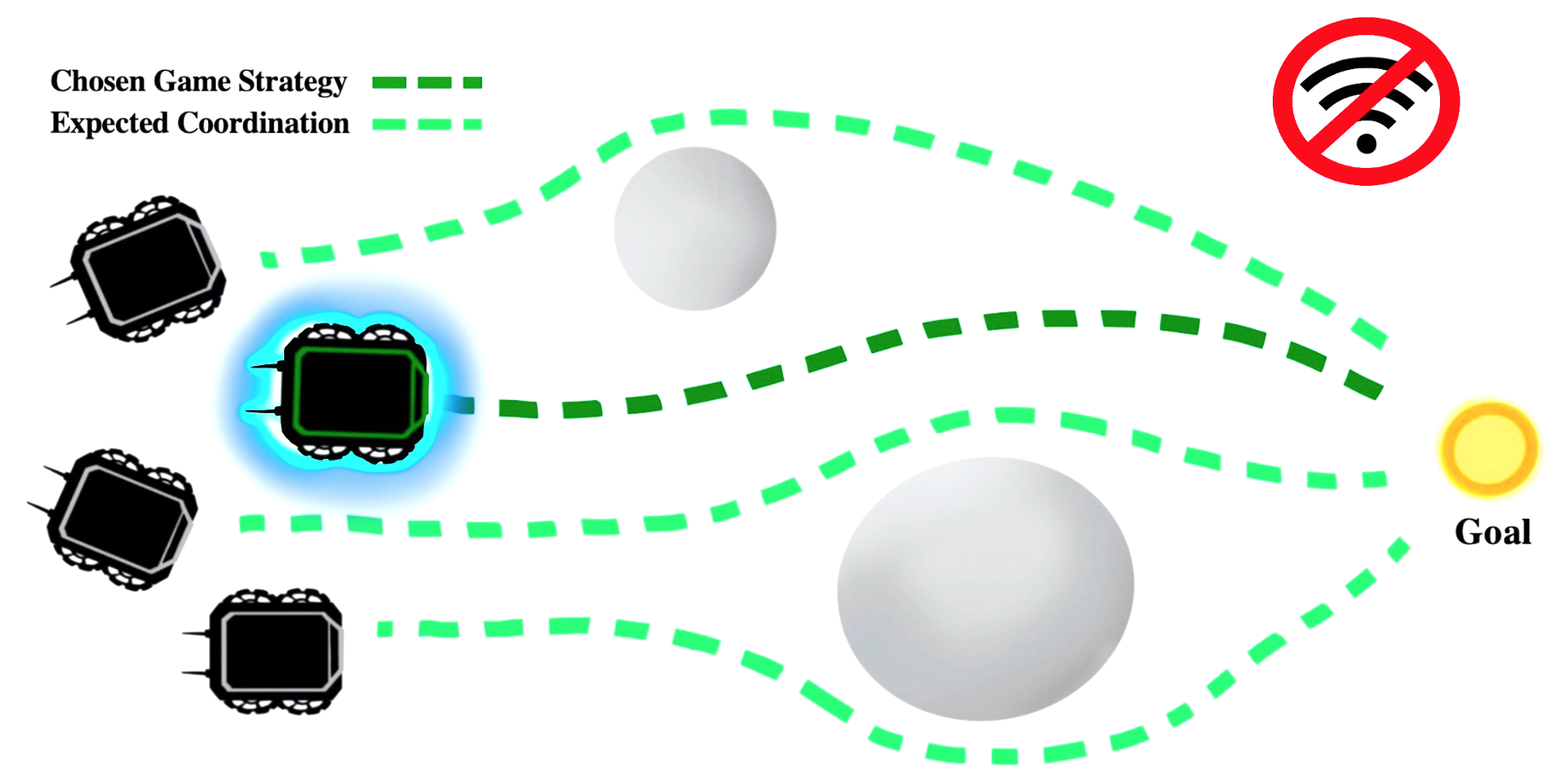}
    \caption{Our approach enables decentralized and communication-free
    navigation.
    The robot highlighted in blue chooses
    its strategy after considering the expected
    strategies of its neighbors and the effect of its strategy
    on them, without actually communicating.
    In this way, the blue robot can
    avoid collisions, maintain team cohesion, avoid obstacles,
    and make progress towards a goal position without direct communication.
    }
    \label{fig:motivation}
\end{figure}

Due to its importance, multi-robot navigation has been explored by 
a wide variety of methods.
Many methods have been designed based on centralized planning and control, 
which require a master robot for centralized computing and communication 
among all teammates 
\cite{aranda2019deformation,reily2020leading,varava2017herding}.
Multiple other methods have been developed to work in a decentralized manner,
but still rely on communication among robots 
\cite{lin2019connectivity, tolstaya2019learning}.
The problem of navigating multiple robots towards a goal
in a completely decentralized and communication-free manner has not
been well addressed, mainly because without communication it is challenging
for a robot to anticipate actions of neighboring robots that dynamically 
make independent decisions \cite{emery2005game_a, emery2005game_b}.
Recently, game theory has attracted increasing attention 
in coordinating robots without communication \cite{bansal2020bayesian}.
Game theoretic methods have been developed to identify 
actions in the context of their effect on teammates and adversaries for
two-player applications such as driving
\cite{ding2018game,liniger2020safe}, 
or racing \cite{wang2019game_a}, as well as larger games in
known environments for surveillance \cite{hernandez2014selective}.
However, they have not been previously utilized to address the problem of 
navigating a variable size multi-robot system through a complex
environment to a goal position.

In this paper, we propose a novel game theoretic and
communication-free approach to multi-robot navigation.
Our approach assumes that a robot is able to locally observe locations
of nearby obstacles and robots in the multi-robot system, but does not 
require communication with teammates or knowledge of the entire
system.
Then, our approach models each robot and its neighboring teammates with a
strategic game, formulating multi-robot navigation as a problem
of solving a game for each robot at each time step.
These games are solved in a distributed fashion, with each robot playing 
its own individual game.
In each game, illustrated in Figure \ref{fig:motivation}, 
a robot estimates the behaviors of its 
neighbors in order to identify movements that avoid collisions,
while also progressing towards stationary or dynamic goal positions and 
avoiding obstacles.
Collision-free actions are determined by the robot to minimize
a cost function, which is theoretically proved to approach 
a Nash equilibrium, i.e., a state in the game where no player can
unilaterally improve their outcome.

The contribution of this paper is twofold.
\begin{enumerate}
    \item We propose a novel formulation that models multi-robot 
    navigation as multiple distributed normal-form games, allowing individual 
    robots to form precise expectations about the actions of their 
    neighbors and the effects of their own actions, which then enables communication-free and decentralized multi-robot navigation.
    \item We theoretically prove that these games approach stable 
    Nash equilibria, which also enables each robot in the team to identify 
    its optimal strategy without the requirement of communication with 
    other teammates.
\end{enumerate}

\section{Related Work}
\label{sec:related}

\subsection{Multi-Robot Maneuver}

Many multi-robot navigation methods require explicit communication
among robots, either to
avoid collisions with obstacles by generating a consensus behavior
among robots \cite{alonso2016distributed} or to calculate a global
gradient that controls the overall shape of the multi-robot formation
\cite{aranda2019deformation}.
Several methods have used leader and follower models to navigate a 
multi-robot system.
By controlling the movements of a number of robots, the overall 
behavior of the system was influenced \cite{sabattini2016coordinated}.
A single leader was utilized in \cite{kwon2019influencing}, basing
its movements on influencing nearby naive agents to follow it to goal
positions.
A single leader was also utilized in \cite{vrohidis2017safe} and multiple
leaders were used in \cite{reily2020leading}, but
both required guaranteed connectivity with the
leader.
A consistent drawback of these models is
a required communication, either for centralized control or for
knowledge of follower positions in the environment.

Recently, several decentralized methods have been implemented.
Graph neural networks have enabled decentralized behavior policies
\cite{tolstaya2019learning,khan2019graph}, but relied on 
local communication that
allowed information to propagate globally.
Graph temporal logic was applied to swarm guidance in order to
create policies that scale to large numbers of agents
\cite{djeumou2020probabilistic}, but relies on knowledge of
the overall agent distribution.
Deep reinforcement learning was used to learn navigation policies
around complex obstacles, allowing decentralized policies but 
requiring full knowledge of the state of the multi-robot system for training
\cite{lin2019end,lin2019connectivity}.

\subsection{Game Theory for Robotics}

Game theory seeks to explain and analyze the interaction of rational agents accommodating their own preferences and expectations. Modelling systems as games enables agents to 
analyze actions based on how they impact other agents.
Many methods have been designed to find solutions to games, 
including optimization and reinforcement learning
\cite{mertikopoulos2016learning,basar2018nonzero,halpern2017characterizing}.
Game theory has been applied in several multi-robot system applications \cite{robin2016multi},
due to its ability to consider the interaction of multiple
agents and their competing motivations.

In autonomous driving, game theory has been applied to the
problem of lane changing and merging \cite{liniger2020safe}.
Deep reinforcement learning was used to solve two player merging games
in \cite{ding2018game}, which relied on defining drivers as proactive or passive.
Roles for two players were also used in \cite{fisac2019hierarchical},
which modelled lane changing as a Stackleberg game that has implicit 
leaders and followers.
Multiple drivers were modelled in \cite{cleac2020algames},
which solved games through optimization by converting constraints such
as collision avoidance to Lagrangian multiplier terms.
In addition to lane changing, game theory was also used to generate racing
trajectories which would avoid collisions while still competing
with other agents \cite{wang2019game_a,wang2019game_b}.

Despite the exploration of game theory in autonomous driving,
previous research has mostly been limited to
only two player systems (i.e., two cars).
Recently, game theory has been applied to several multi-robot scenarios.
Many approaches have considered task allocation, both with the
requirement of communication \cite{roldan2018should} and enabling
this without communication being necessary
\cite{kanakia2016modeling,gao2019multirobot,dai2019task}.
Collision avoidance has been considered, but has required
the agents to be capable of communicating with each other
\cite{mylvaganam2017differential}.
Multi-robot sensing and patrolling has been the other main
scenario explored with game theoretic models, but approaches
have required prior mappings of the environment
\cite{meng2008multi,hernandez2014selective}
or still require communication between agents \cite{ramaswamy2016mutual}.

Game theoretic models have been used due to their useful
ability to model other agents in a system, but have not been 
theoretically well studied in generating navigational behaviors for 
a large multi-robot system without the ability to communicate between 
robots in complex environments.

\section{Our Proposed Approach}
\label{sec:approach}

\subsection{Multi-Robot Navigation as a Decentralized Game}
\label{sec:formulation}

We consider a multi-robot system consisting of $N$ robots.
Each robot has a position $\ppp \in \Real^d$, where $\ppp_i$ is the position of the $i$-th robot.
At each time step $t$, each robot makes a decision to take an action $\aaa_i^t$ and update its position.
We define a goal $\ggg \in \Real^d$ that is 
the position that the multi-robot system is attempting to 
navigate to, which is known to each robot.
This goal can be stationary (e.g., a fixed location in the environment) 
or dynamic (e.g., a moving human whom the multi-robot system must follow).
We define the multi-robot system to have reached the goal state
when $\| \ppp_+ - \ggg \|_2 < T$, where $\ppp_+$ is the geometric center
of the robot system and $T$ is a distance
threshold.

We define an ideal interaction distance $R$ between robots,
where if $\| \ppp_i - \ppp_j \|_2 \ll R$ then
the $i$-th and $j$-th robots are at risk of collision (with the
collision risk rising as $\ppp_i$ approaches $\ppp_j$) and if
$\| \ppp_i - \ppp_j \|_2 \gg R$ then there
is a risk of losing cohesion between
robots $i$ and $j$ (with this risk rising as $\ppp_i$ moves
further away from $\ppp_j$).
Successful cohesion in the multi-robot system results from maintaining
$\| \ppp_i - \ppp_j \|_2 = R \; \forall \: i, j \in \{ 1, \dots, N \}, i \neq j$.
Finally, we consider a set of $O$ obstacles in the environment.
Robots should take actions such that $\| \ppp_i - \ooo_j \|_2 > 0$,
so as to avoid collisions with obstacles.
We assume each robot is able to locally perceive neighboring 
robots and obstacles.

Then, we formulate the problem of multi-robot navigation with multiple
distributed and 
differentiable simultaneous strategic games
$\mathcal{G} = \{ \mathcal{N}, \mathcal{X}, \mathcal{J} \}$,
where each robot is the main player of a game based on its local environment.
Each game is \emph{simultaneous} as the robot plays its game at each
time step without knowledge of how other robots will move.
Each game is also \emph{strategic}, where players are unable to 
coordinate through explicit communication.
$\mathcal{N}$ represents the set of players, where the 
cardinality of $\mathcal{N}$ is denoted as $N'$.
This set of players is defined as the main player robot and
the neighboring robots within sensing distance.
$\mathcal{X} = \{ X_1, \dots, X_{N'} \}$ represents the strategy set for
game $\mathcal{G}$, where $X_i$ represents the set of available strategies of the $i$-th
player.
We define a strategy $\xxx_i \in X_i$ to be a movement update
available to the $i$-th robot given $\ppp_i$ and actions $\aaa_i$
\begin{equation}
\xxx_i = \ppp_i + \alpha  \aaa_i
\vspace{-5pt}
\label{eq:strategy}
\end{equation}
where $\aaa_i$ satisfies $\| \aaa_i \|_2 = 1$ and $\alpha$ is a hyperparameter
controlling the importance of the action.
Values of $\alpha$ are discussed later in the paper when 
consider the theoretical aspects of the games.

We define 
$\mathcal{J} = \{ J_1 ( \xxx_1, \xxx_{-1} ), \dots, J_{N'} ( \xxx_{N'}, \xxx_{-N'} ) \}$ 
as the set of \emph{differentiable} cost functions that govern the 
action of each player, where $\xxx_i$ is the strategy of the $i$-th 
player and $\xxx_{-i}$ is the strategy set of all other players, i.e.
$\xxx_{-i} = ( \xxx_1, \dots, \xxx_{i-1}, \xxx_{i+1}, \dots, \xxx_{N'} )$.
Each player is assumed to be \emph{rational},
i.e.,  each player will select a strategy $\xxx_i$ that minimizes
its cost function $J_i ( \xxx_i, \xxx_{-i} )$, while expecting that 
every other player is selecting strategies in $\xxx_{-i}$ that minimize
their own cost functions.
We denote the optimal strategy for player $i$ as $\xxx_i^*$ and the optimal
strategies for the other players as $\xxx_{-i}^*$.
The formulated cost function common to each player is defined as:
\begin{equation}
\small{
J_i ( \xxx_i, \xxx_{-i} ) = \beta_1 \| \xxx_i - \ggg \|_2^2 + \beta_2 C_i( \xxx_i, \xxx_{-i} ) - \beta_3 \sum_{o=0}^O \| \xxx_i - \ooo_o \|_2^2
}
\label{eq:cost}
\end{equation}
where $C_i ( \xxx_i, \xxx_{-i} )$ represents a cost related to 
cohesion with other players within sensing distance, which can be defined as 
$C_i( \xxx_i, \xxx_{-i} ) = \sum_{n=1, n \neq i}^{N'} \| (\xxx_i - \zzz_n) \|_2^2$,
where $\zzz_n$ defines a point that is on the axis between a
sensed robot $n$ and robot $i$ and is distance $R$ from robot $n$
(e.g., if the distance between robots is less than $R$, this cost
will encourage robot $i$ to move away from robot $n$).
This is calculated from the position of robot $i$
using the estimated strategy $\xxx_n$ of the neighboring robot $n$,
such that $\zzz_n = R ( \ppp_i - \xxx_n ) / ( \| \ppp_i - \xxx_n \|_2 )$.

Hyperparameters $\beta_1, \beta_2, \beta_3$
balance the different influences and are valued
such that $\beta_1 \geq O \beta_3 - N' \beta_2$.
$\beta_1$ controls the weighting of progress towards the
goal.
This part of the cost function decreases as the $i$-th robot
makes progress towards the goal.
$\beta_2$ controls the weighting of cohesion with 
neighboring robots.
This cohesion cost is minimized when the $i$-th robot is at a
distance $R$ from every neighbor.
$\beta_3$ controls the weighting of navigating to avoid obstacles.
As robots move away from obstacles, this term minimizes the 
overall cost function.

\subsection{Solution and Analysis of the Game}
\label{sec:solution}

In order to identify optimal behaviors for each robot, 
our approach constructs the described game $\mathcal{G}_i$ for each robot 
$i \in \{ 1, \dots, N \}$ at each time step.
Algorithm \ref{alg:solve} solves the game required to enable decentralized
and communication-free multi-robot navigation.
The algorithm identifies the players involved in each game $\mathcal{G}_i$,
consisting of only the local neighbors of robot $i$,
and initializes a null strategy for the main player.
Then, it minimizes each neighboring player's cost function to
identify optimal strategies $\xxx_{-i}^*$.
From this, it updates the main player's cost function and identifies the
optimal strategy $\xxx_i^*$ that minimizes it.
When Algorithm \ref{alg:solve} converges after iteration, 
$\xxx_i^*$ is adopted as the strategy for robot $i$.
In empirical evaluation, Algorithm \ref{alg:solve} converges
quickly, typically in less than $N'$ iterations.

For each game, we have two objectives for the main player 
(denoted as robot $i$).
First, our approach should minimize the cost function 
$J_i ( \xxx_i, \xxx_{-i} )$
described in Eq. (\ref{eq:cost}), in order to identify the best strategy 
for this main player.
Second, our approach should generate strategies that move in the direction
of a Nash equilibrium for the game, or a state where no player can improve their
strategy without other players changing their strategies.
If every player acts rationally, as we are assuming, then a Nash equilibrium
means that robot $i$ has identified a strategy $\xxx_i$ that cannot be
improved upon, assuming its expectations of its neighbors' optimal strategies $\xxx_{-i}^*$ are accurate.

\subsubsection{Minimizing the Cost Function}

First, we consider the objective of minimizing the cost function 
$J_i ( \xxx_i, \xxx_{-i} )$
for the main player of the game $\mathcal{G}_i$.
We have defined our game $\mathcal{G}$ and our cost functions $\mathcal{J}$
as \emph{differentiable}, and so we are able to utilize derivative-based
methods in order to identify minima which correspond to equilibria in
the game.
To take advantage of this fact, we define an action $\aaa_i$ that is part
of a strategy $\xxx_i$ with the gradient of the cost function:
\begin{align}
\label{eq:actions}
\aaa_i  & = - \nabla_i J_i ( \xxx_i, \xxx_{-i} )
\vspace{-7pt}
\end{align}
with the exact gradient of Eq. (\ref{eq:cost}) omitted for space.
Just as gradient descent moves in the direction that minimizes a function,
by defining our actions this way robots will take physical moves in
the direction of minimizing their cost functions.
Then, $\aaa_i$ is normalized to be a unit vector by 
$\aaa_i / \| \aaa_i \|_2$ in order to represent real-world 
restrictions on robot movements at a single time step.

\begin{algorithm}[h]
\vspace{-1pt}
\SetAlgoLined
\SetKwInOut{Input}{Input}
\SetKwInOut{Output}{Output}
\SetNlSty{textrm}{}{:}
\SetKwComment{tcc}{/*}{*/}
\KwIn{A multi-robot system with $N$ robots.}
\KwOut{Optimal strategies $\xxx_i^*$ for each robot.}

\For{$i \gets 1$ \KwTo $N$}{
Identify positions of neighboring robots and obstacles within sensing distance of robot $i$.

Construct game $\mathcal{G}_i = \{ \mathcal{N}, \mathcal{X}, \mathcal{J} \}$.

Initialize strategy $\xxx_i$ with a zero action $\aaa_i$.

\Repeat{
convergence
}{
Estimate strategies $\xxx_{-i}^*$ for neighbors.

Update strategy $\xxx_i$ to minimize $J_i ( \xxx_i, \xxx_{-i}^* )$.
}
}
\caption{Solving the Decentralized Game.}
\label{alg:solve}
\vspace{-3pt}
\end{algorithm}

We now introduce constraints on $\mathcal{X}$ to include explicit
collision avoidance, both with obstacles and neighboring robots.
These constraints ensure that the valid set of strategies 
$\mathcal{X}_i$ for robot $i$ does not allow actions that would 
move it within the estimated position of a neighbor or obstacle:
\begin{align}
\label{eq:a_constraints}
X_i = & \Big\{ \xxx_i : \xxx_i = \ppp_i + \alpha \aaa_i, \| \aaa_i \|_2 = 1,
\| \xxx_i - \ppp_j \|_2 \geq \tau, \\ 
& \;\; \| \xxx_i - \ooo_k \|_2 \geq \tau \; \forall j \in \mathcal{N}, i \neq j, k \in \{ 1, \dots, O \} \Big\} \notag
\vspace{-7pt}
\end{align}
where $\tau$ is a distance defined as that where collision occurs and
is dependent on the physical dimensions of the robots.
Though we propose a general solution where valid movements can 
occur in any direction (as with a holonomic robot), additional
constraints can be applied here in correspondence with a robot's 
physical and kinematic limitations.

Given Eqs. (\ref{eq:actions}) and (\ref{eq:a_constraints}), 
we prove in Theorem \ref{th:one} that our approach minimizes
the cost function of each robot.

\begin{theorem}
\label{th:one}
The inner loop of Algorithm \ref{alg:solve} minimizes the value of
$J_i ( \xxx_i, \xxx_{-i} )$ at each iteration for the main player $i$.
\end{theorem}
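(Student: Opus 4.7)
The plan is to interpret the $\xxx_i$-update inside the inner loop of Algorithm~\ref{alg:solve} as a normalized gradient step on $J_i(\cdot, \xxx_{-i}^{*})$ with the neighbors' best responses $\xxx_{-i}^{*}$ held fixed during that update, and to show this step strictly decreases the cost whenever the current $\xxx_i$ is not a stationary point. To set up a first-order analysis, I would first verify that $J_i(\cdot, \xxx_{-i}^{*})$ is (essentially) convex in $\xxx_i$: every term in Eq.~(\ref{eq:cost}) is quadratic, so the Hessian is the constant matrix $H = 2\bigl(\beta_1 + (N'-1)\beta_2 - O\beta_3\bigr) I$, and the stated hyperparameter bound $\beta_1 \geq O\beta_3 - N'\beta_2$ makes $H \succeq 0$ up to a $\beta_2$ slack.

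Next I would verify that the action $\aaa_i$ from Eq.~(\ref{eq:actions}) is a strict descent direction. Before normalization it is exactly $-\nabla_i J_i$, so after normalization $\langle \nabla_i J_i, \aaa_i\rangle = -\|\nabla_i J_i\|_2 < 0$ at any non-stationary iterate. Since $J_i$ is quadratic in $\xxx_i$, the Taylor expansion around the current $\xxx_i$ is exact,
\begin{equation*}
J_i(\xxx_i + \alpha \aaa_i, \xxx_{-i}^{*}) = J_i(\xxx_i, \xxx_{-i}^{*}) - \alpha \|\nabla_i J_i\|_2 + \tfrac{\alpha^2}{2}\, \aaa_i^{\top} H \aaa_i,
\end{equation*}
so for $\alpha$ below $2\|\nabla_i J_i\|_2 / \aaa_i^{\top} H \aaa_i$ (when $\aaa_i^{\top} H \aaa_i > 0$), or for any $\alpha > 0$ (when $\aaa_i^{\top} H \aaa_i \leq 0$), the value strictly decreases. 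The feasibility constraint $\xxx_i \in X_i$ from Eq.~(\ref{eq:a_constraints}) is then handled by projecting the candidate onto the ball-exclusion set or by backtracking $\alpha$ until the step is feasible; convexity ensures the descent property is preserved.

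The main obstacle will be controlling the concave contribution $-2O\beta_3 I$ from the obstacle-avoidance term, whose negative curvature can in principle defeat the linear descent: translating the stated hyperparameter condition into an explicit admissible step size is where the proof needs to be tight. A secondary subtlety to flag is that across successive inner-loop iterations the neighbors' best responses $\xxx_{-i}^{*}$ are themselves re-estimated, which can change $J_i$ independently of the $\xxx_i$-update; the theorem is therefore best read as monotonic decrease within a single $\xxx_i$-update at a fixed $\xxx_{-i}^{*}$, which is exactly what the gradient-step argument delivers.
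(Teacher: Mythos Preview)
Your proposal is correct and follows the same core idea as the paper: recognize the inner-loop update as a (normalized) gradient step on $J_i(\cdot,\xxx_{-i}^{*})$ and invoke the descent property for sufficiently small $\alpha$. The paper's own proof is considerably terser than yours---it simply writes the update in gradient-descent form, asserts that small enough $\alpha$ yields decrease, and omits the rest ``for brevity''; it does not carry out the Hessian computation, give an explicit step-size bound, discuss the feasibility constraint in Eq.~(\ref{eq:a_constraints}), or flag the caveat about $\xxx_{-i}^{*}$ being re-estimated across iterations, all of which you supply on top of the same skeleton.
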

\begin{proof}
Eq. (\ref{eq:strategy}) computes a strategy $\xxx_i$ for the 
$i$-th player by $\xxx_i = \ppp_i + \alpha \aaa_i$.
From Eq. (\ref{eq:strategy}), we can also define the  null strategy 
$\xxx_i^0$ as that $\ppp_i$ with an action $\aaa_i = \mathbf{0}$ 
is equal to $\xxx_i$, meaning that $J_i (\xxx_i^0, \xxx_{-i} )$ 
returns the cost at the current position of robot $i$.
Finally, Eq. (\ref{eq:actions}) computes 
$\aaa_i = - \nabla_i J_i ( \xxx_i, \xxx_{-i} )$.
Combining these terms together results in:
\begin{align}
J_i ( \xxx_i, \xxx_{-i} ) = J_i ( \xxx_i^0, \xxx_{-i} ) - \alpha \nabla_i J_i ( \xxx_i, \xxx_{-i} )
\vspace{-7pt}
\label{eq:desc}
\end{align}
Thus, for sufficiently small step sizes of $\alpha$, 
Eq. (\ref{eq:desc}) is equivalent to gradient descent,
and thus decreases the cost function:
\begin{align}
J_i ( \xxx_i^{t+1}, \xxx_{-i}^{t+1} ) \leq J_i ( \xxx_i^{t}, \xxx_{-i}^t )
\vspace{-7pt}
\label{eq:reduction}
\end{align}
Further proof that gradient descent decreases function values is omitted for brevity. 
\end{proof}

\subsubsection{Approaching a Nash Equilibrium}

We now prove the objective of generating strategies for all players
$i \in \mathcal{N}$ such that each game moves in the direction of a Nash
equilibrium.
A Nash equilibrium results when no single agent can improve
its position without other agents changing their strategies.
In our case, when all agents are \emph{rational} in being
motivated to minimize their defined cost function, the Nash equilibrium
is also the optimal solution to a game.
Formally, a Nash equilibrium is achieved when an optimal strategy $\xxx_i^*$
is identified such that:
\begin{align}
J_i ( \xxx_i^*, \xxx_{-i}^* ) \leq J_i ( \xxx_i, \xxx_{-i}^* ) \quad \forall \xxx_i \in \mathcal{X}_i, \; i \in \mathcal{N}
\vspace{-7pt}
\end{align}

Mathematically, a Nash equilibrium occurs when the first derivative of
$J_i ( \xxx_i, \xxx_{-i} )  \in \mathcal{J}$ equals $\mathbf{0}$:
\begin{align}
\nabla_i J_i ( \xxx_i, \xxx_{-i} ) = \mathbf{0} \quad \forall i \in \mathcal{N}
\vspace{-7pt}
\label{eq:equil}
\end{align}
The set of first derivatives of all cost functions is defined as the {game gradient}:
\begin{align}
\nabla \mathcal{J} = [ ( \nabla_1 J_1 ( \xxx_i, \xxx_{-i} ) )^\top, \dots, ( \nabla_{N'} J_{N'} ( \xxx_i, \xxx_{-i} )  )^\top ]^\top \notag
\vspace{-7pt}
\end{align}
It is necessary for a Nash equilibrium that the game gradient 
$\nabla \mathcal{J} = \mathbf{0}$.
Additionally, a \emph{stable} Nash equilibrium occurs when the second derivative
of each cost function is positive;
that is, when the Jacobian of the cost functions is positive, or:
\begin{align}
\nabla_i^2 J_i ( \xxx_i, \xxx_{-i} ) \geq \mathbf{0} \quad \forall i \in \mathcal{N}
\vspace{-7pt}
\end{align}

In light of the above rigorous conditions, and given that all $J_i ( \xxx_i, \xxx_{-i} )$
defined by Eq. (\ref{eq:cost}) are parameterized by the cascading 
strategies of neighboring robots (i.e., when a neighboring robot moves,
it effects the cost functions of all nearby robots), 
a Nash equilibrium may not be theoretically possible to achieve in an
individual game (i.e., since a game is played at each time step and 
robots can only move so far physically at a time step).
Instead, we prove that strategies identified by Algorithm \ref{alg:solve}
approach a stable Nash equilibrium.
\begin{theorem}
Algorithm \ref{alg:solve} decides strategies for each player 
$i \in \mathcal{N}$ such that $\mathcal{G}_n$ approaches a stable 
Nash equilibrium.
\label{th:two}
\end{theorem}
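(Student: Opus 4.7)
The plan is to verify, in turn, the two conditions that together characterize a stable Nash equilibrium in Section~\ref{sec:solution}: the first-order condition that the game gradient $\nabla \mathcal{J}$ approaches $\mathbf{0}$, and the second-order condition that each diagonal Hessian block $\nabla_i^2 J_i$ is positive (semi)definite.  Theorem~\ref{th:one} already does most of the heavy lifting for the first, while the coefficient hypothesis $\beta_1 \geq O\beta_3 - N'\beta_2$ stated just after Eq.~(\ref{eq:cost}) is the workhorse for the second.

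For the first-order condition, I would argue via monotone convergence.  By Theorem~\ref{th:one}, the inner loop of Algorithm~\ref{alg:solve} generates a non-increasing sequence $J_i(\xxx_i^t, \xxx_{-i}^t)$.  A brief inspection of Eq.~(\ref{eq:cost}) under the stated coefficient constraint shows that the positive quadratic goal and cohesion terms eventually dominate the negative obstacle terms, so $J_i$ is bounded below on any bounded workspace; the monotone sequence therefore converges.  Since the per-step update in Eq.~(\ref{eq:desc}) is a gradient step, a standard descent-lemma estimate of the form $J_i(\xxx_i^{t+1}) \leq J_i(\xxx_i^t) - c \, \|\nabla_i J_i\|_2^2$ holds for sufficiently small $\alpha$, so convergence of the value sequence forces $\|\nabla_i J_i\|_2 \to 0$ for every $i$.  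Assembling across players then yields $\nabla \mathcal{J} \to \mathbf{0}$, which is the necessary condition in Eq.~(\ref{eq:equil}).

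For the second-order condition, I would differentiate Eq.~(\ref{eq:cost}) twice with respect to $\xxx_i$.  Since $\zzz_n$ is computed from the neighbor strategy $\xxx_n$ rather than from $\xxx_i$, each squared-distance term contributes a scalar multiple of the identity, so $\nabla_i^2 J_i$ reduces to a scalar multiple of $I$ whose coefficient is a positive combination of $\beta_1$ and $\beta_2$ minus a multiple of $\beta_3$.  The hypothesis $\beta_1 \geq O\beta_3 - N'\beta_2$ is precisely what is required to make this coefficient non-negative, yielding a positive semidefinite Hessian and hence stability in the sense defined in Section~\ref{sec:solution}.

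The main obstacle I anticipate is the coupling between players: Theorem~\ref{th:one} proves monotonicity with $\xxx_{-i}$ held fixed, whereas Algorithm~\ref{alg:solve} re-estimates $\xxx_{-i}^*$ at every outer iteration, so the simple gradient-descent argument does not by itself establish joint convergence.  I would handle this by leaning on the \emph{approaches} language in the theorem statement rather than claiming exact equilibrium in finitely many steps: once the inner-loop estimates $\xxx_{-i}^*$ have stabilized into a mutually consistent profile, the gradient identity in Eq.~(\ref{eq:actions}) simultaneously forces $\nabla_i J_i \approx \mathbf{0}$ for every player, which is exactly $\nabla \mathcal{J} \approx \mathbf{0}$.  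A fully rigorous treatment would still need a small supplementary argument that the collision-avoidance constraints in Eq.~(\ref{eq:a_constraints}) do not obstruct descent, but the convex-quadratic structure of $J_i$ restricted to $X_i$ makes this a routine projection argument rather than a structural difficulty.
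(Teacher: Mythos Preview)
Your proposal is correct and follows essentially the same two-part structure as the paper: invoke Theorem~\ref{th:one} to drive each $\nabla_i J_i$ toward $\mathbf{0}$, then compute $\nabla_i^2 J_i = 2\beta_1 + 2N'\beta_2 - 2O\beta_3$ and appeal to the hyperparameter hypothesis for stability. Your treatment is in fact more careful than the paper's own proof, which simply asserts $J_i(\xxx_i^t,\xxx_{-i}^t)\to\mathbf{0}$ from Theorem~\ref{th:one} without the bounded-below and descent-lemma steps you supply, and does not explicitly flag the player-coupling or constraint-projection issues you raise.
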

\begin{proof}
In Eq. (\ref{eq:actions}), we define the action $\aaa_i$
taken using the gradient of the cost function.
From Theorem \ref{th:one}, we have proven that defining our action
in such a way minimizes the cost function and thus
approaches $\nabla_i J_i (\xxx_i, \xxx_{-i}) = \mathbf{0}$.
By taking this action (or an action in the direction of 
it, after normalizing to unit length), the $i$-th robot moves in the
direction of minimizing this cost function.
As Theorem \ref{th:one} holds for all $i \in \mathcal{N}$, 
the result of applying Algorithm \ref{alg:solve} for each robot
is that:
\begin{align}
\nabla_i J_i ( \xxx_i^t, \xxx_{-i}^t ) \to \mathbf{0} \; \text{as} \; t \to \infty \quad \forall i \in \mathcal{N}
\vspace{-7pt}
\end{align}
Thus, our approach results in each game approaching a Nash equilibrium.

\begin{figure}[]
    \centering
    \subfigure[Obstacle Field]{
        \label{fig:scenario_A}
        \centering
        \includegraphics[height=1.05in]{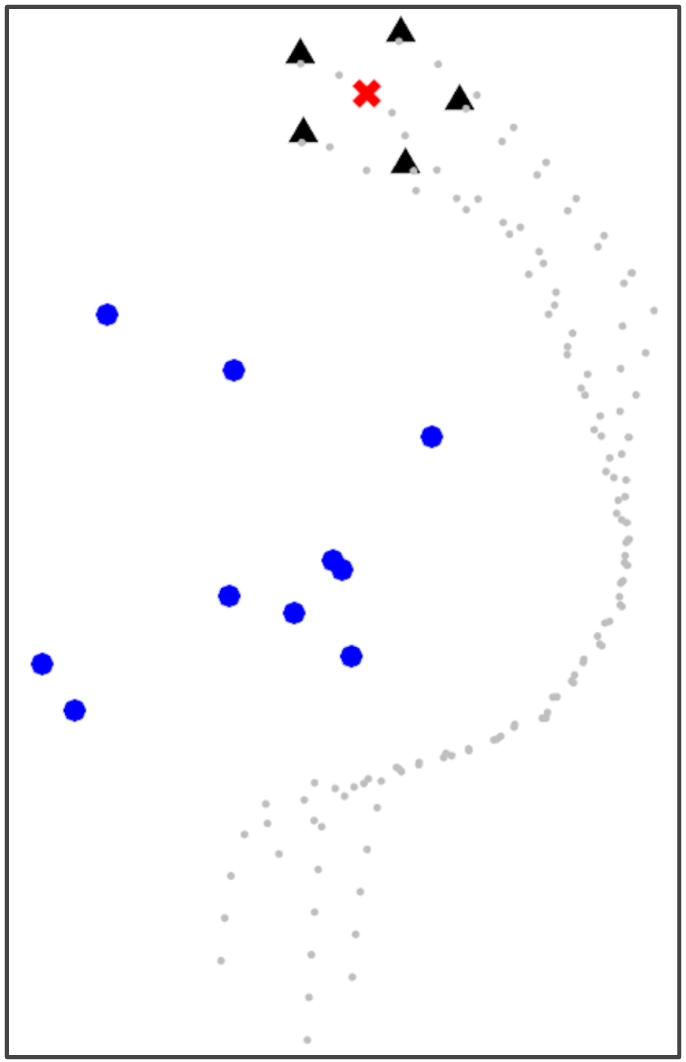}
    }%
    \subfigure[Area Obstacles]{
        \label{fig:scenario_B}
        \centering
        \includegraphics[height=1.05in]{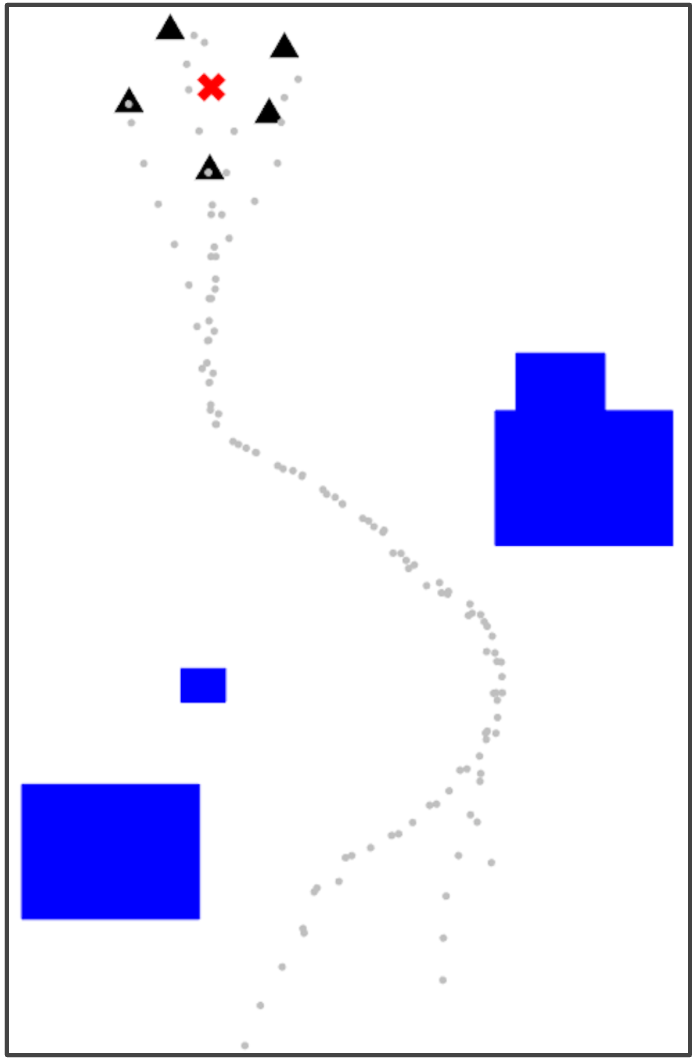}
    }%
    \subfigure[Dynamic Goal]{
        \label{fig:scenario_C}
        \centering
        \includegraphics[height=1.05in]{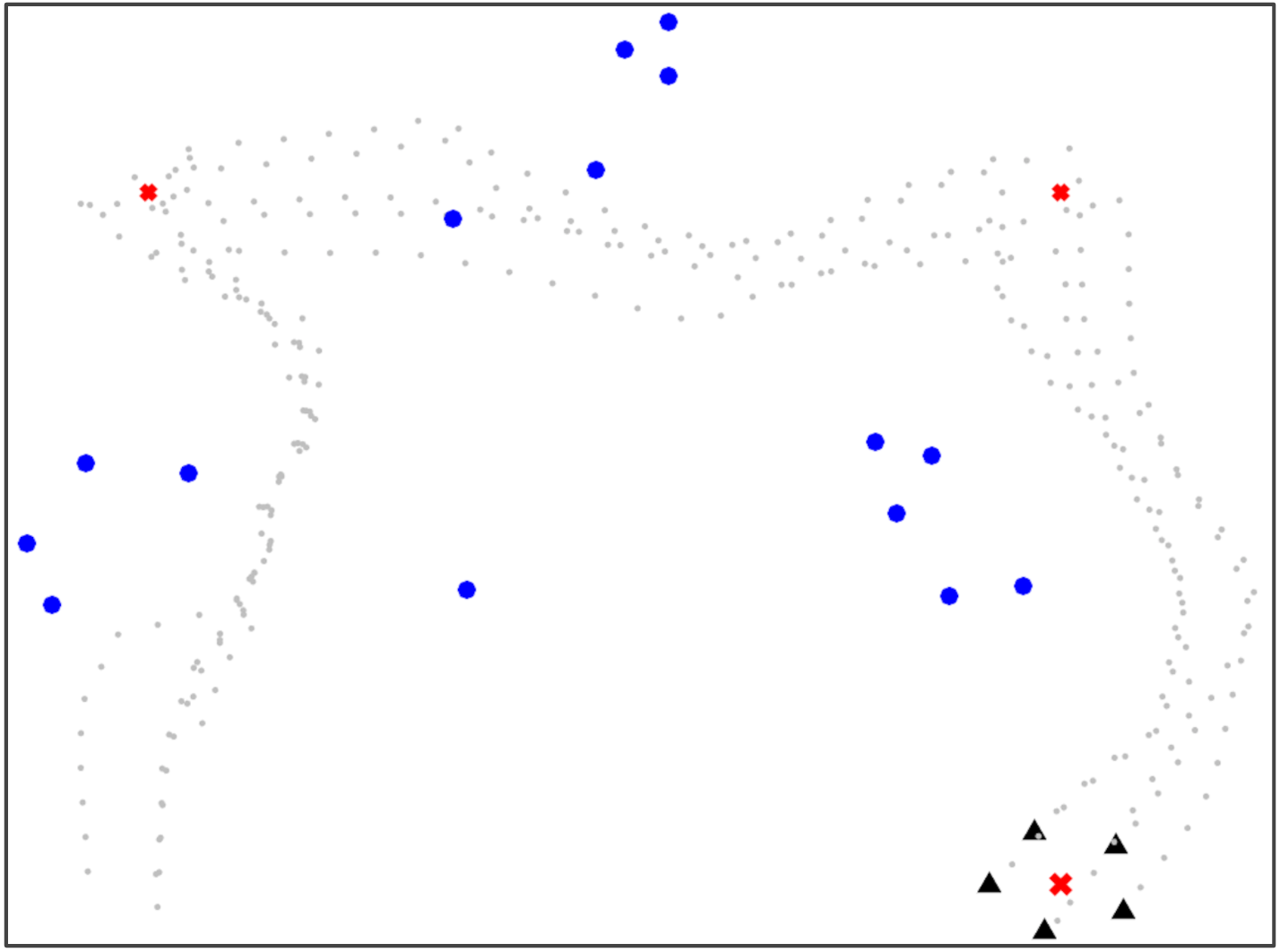}
    }
    \vneg
    \caption{Qualitative results showing the navigation of a 
    multi-robot system through an instance of each scenario.
    Robots are represented by black triangles, with their paths shown in
    light grey.
    The goal positions are marked by red `x' marks, with 
    obstacles marked in blue.
    }\label{fig:scenarios}
\end{figure}

Additionally, we earlier define that the hyperparameters of the
cost function must be chosen such that
$\beta_1 \geq O \beta_3 - N' \beta_2$.
In order for a Nash equilibrium to be \emph{stable}, the Jacobian 
of the cost functions is positive, requiring
$\nabla_i^2 J_i ( \xxx_i, \xxx_{-i} ) \geq \mathbf{0} \quad \forall i \in \mathcal{N}$.
The second derivative of our cost function is:
\begin{align}
\nabla_i^2 J_i = 2 \beta_1 + 2 N' \beta_2 - 2 O \beta_3
\vspace{-7pt}
\end{align}
Therefore, when the above condition for setting the hyperparameter values
is met, the Jacobian of the cost function is positive.
Thus, each robot $i$ in the game $\mathcal{G}_n$ moves in the
direction of a stable Nash equilibrium, which 
corresponds to an optimal movement for each robot. 
\end{proof}

Theorems \ref{th:one} and \ref{th:two} prove that our approach minimizes
the cost function for each robot in each game and
that every game moves robots in the direction of a stable Nash 
equilibrium, or an optimal strategy for each robot.
Thus, our approach enables optimal multi-robot navigation
with our defined cost function which moves towards the goal,
maintains ideal system cohesion, and avoids collisions with
obstacles.

\section{Experimental Results}
\label{sec:result}

We perform extensive evaluation of our approach,
comparing it to two baseline versions (setting either
$\beta_2$ or $\beta_3$ equal to 0), a greedy approach that 
moves towards the goal without consideration of obstacles or
collisions, and a random approach to provide a worst case 
comparison (as widely used in other works
\cite{kwon2019influencing,reily2020leading}).
Both the greedy and random approaches are restricted to be collision-free,
similar to the strategy constraints in Eq. (\ref{eq:a_constraints}).
We also evaluate with added sensor
noise to make robots less certain of the locations of their
neighbors and obstacles.
Each scenario is simulated 100 times to gain an understanding
of performance with a variety of starting
formations, obstacle arrangements, and goal locations.

We evaluate and compare our approach using four metrics. 
First, \emph{Cohesion} quantifies the
average distance between all robots in the system.
Ideally, the cohesion distance of a system is equal to 
$R$, the defined interaction distance that avoids
collisions and preserves cohesion.
Second, \emph{Connectivity} quantifies
the connectivity of the system.
We define a binary adjacency matrix where the $ij$-th element
equals $1$ if the $i$-th and $j$-th robot are within sensing
distance of each other, and $0$ otherwise.
We then calculate the 2nd smallest eigenvalue of the Laplacian
of this, known as the algebraic 
connectivity of the graph \cite{fiedler1973algebraic}.
We present this metric as a percentage, where $0$ indicates the
system is not connected and $100$ indicates the system is fully
connected.
Though our approach does not require communication,
connectivity has other benefits, such as relaying information about
malfunctions or events.
Third, the \emph{Nearest Obstacle} reports the average
nearest distance to an obstacle at each iteration.
Fourth, the \emph{Distance to Goal} shows the progress of
each compared approach towards the goal position, showing
$\| \ppp_+ - \ggg \|_2$ at each time step.

\subsection{Evaluation in Obstacle Field}

We first evaluated the Obstacle Field scenario seen in 
Figure \ref{fig:scenario_A}, with quantitative results seen in
Figure \ref{fig:scenario_A_results}.
We see that our full approach completes simulations with the closest
to the ideal interaction distance in Figure
\ref{fig:scenario_A1}, while maintaining it well
as time progresses.
The $\beta_3 = 0$ approach performs similarly well, as this still
considers system cohesion.
The greedy approach and $\beta_2 = 0$ approach, which do not value
cohesion, result in robots grouped much too close
together.
Figure \ref{fig:scenario_A2} displays the connectivity status
achieved by different methods over the course of the simulation.
Both our full approach and our $\beta_3 = 0$ approach maintain
connectivity well, as does the greedy approach, which
indirect causes robots to move towards one another at the goal.
Our $\beta_2 = 0$ approach is less able to maintain a
connected system as it moves through obstacles 
due to it not valuing spacing between robots, but increases 
connectivity as the system reaches the goal position.

\begin{figure*}[h]
    \centering
    \subfigure[Cohesion]{
        \hspace{-12pt}
        \label{fig:scenario_A1}
        \centering
        \includegraphics[width=0.215\textwidth]{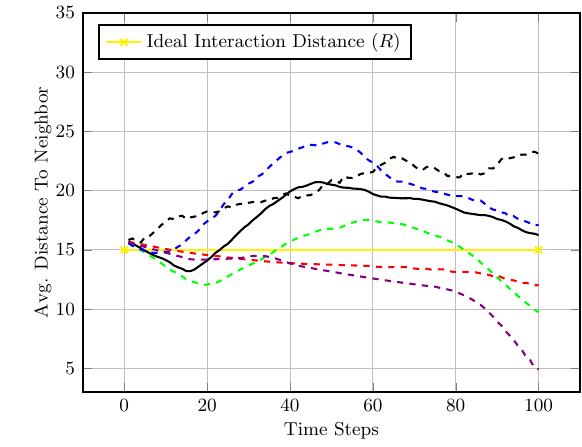}
    }%
    \subfigure[Connectivity]{
        \hspace{-5pt}
        \label{fig:scenario_A2}
        \centering
        \includegraphics[width=0.22\textwidth]{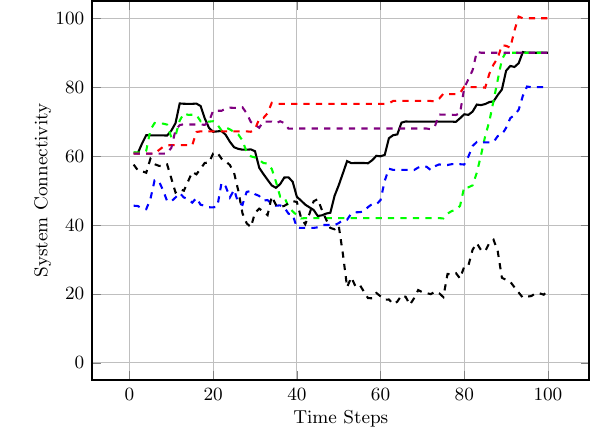}
    }%
    \subfigure[Nearest Obstacle]{
        \hspace{-5pt}
        \label{fig:scenario_A3}
        \centering
        \includegraphics[width=0.215\textwidth]{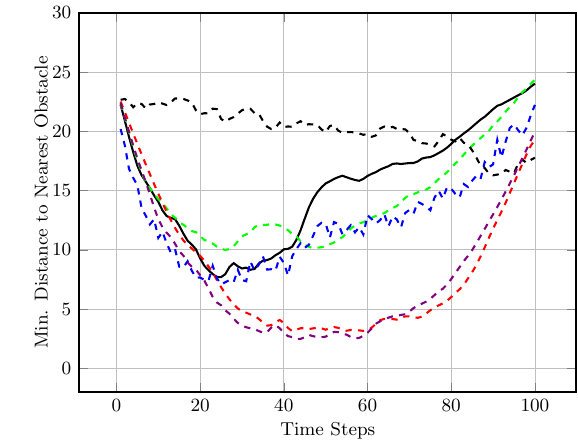}
    }%
    \subfigure[Distance to Goal]{
        \hspace{-5pt}
        \label{fig:scenario_A4}
        \centering
        \includegraphics[width=0.22\textwidth]{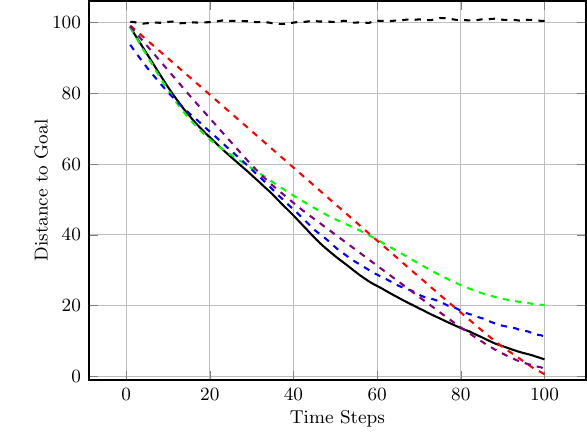}
    }
    \vneg
    \caption{Results for the Obstacle Field scenario. Results for our full approach are marked with a solid black line.  Comparison approaches are marked with dashed lines, with 
    $\beta_2 = 0$ in \color{green}green\color{black}, 
    $\beta_3 = 0$ in \color{red}red\color{black}, 
    $10\%$ noise in \color{blue}blue\color{black}, 
    random in black, and 
    greedy in \color[rgb]{0.3,0,0.7}purple\color{black}. 
    These colors are also used later. [Best viewed in color.]}
    \label{fig:scenario_A_results}
\end{figure*}

\begin{figure*}[h]
    \centering
    \subfigure[Cohesion]{
        \hspace{-12pt}
        \label{fig:scenario_B1}
        \centering
        \includegraphics[width=0.215\textwidth]{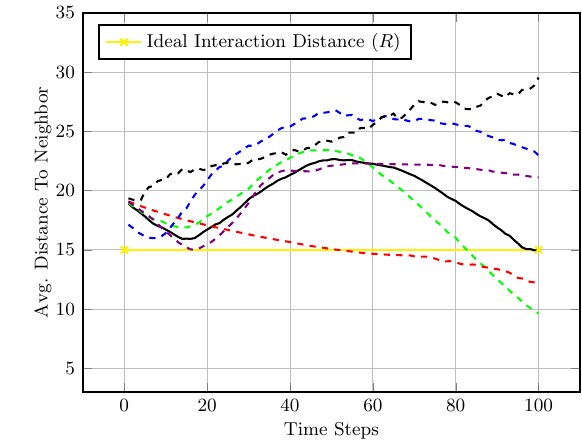}
    }%
    \subfigure[Connectivity]{
        \hspace{-5pt}
        \label{fig:scenario_B2}
        \centering
        \includegraphics[width=0.22\textwidth]{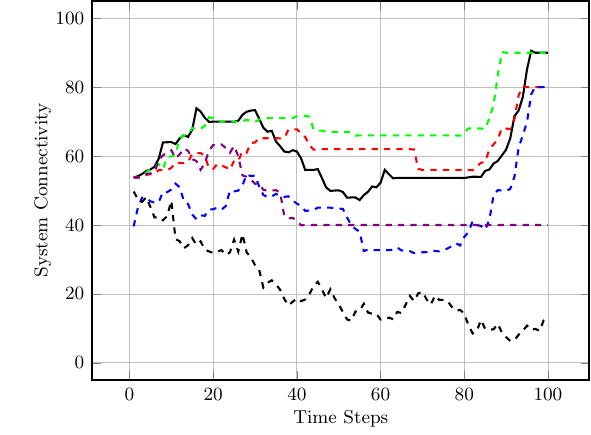}
    }%
    \subfigure[Nearest Obstacle]{
        \hspace{-5pt}
        \label{fig:scenario_B3}
        \centering
        \includegraphics[width=0.215\textwidth]{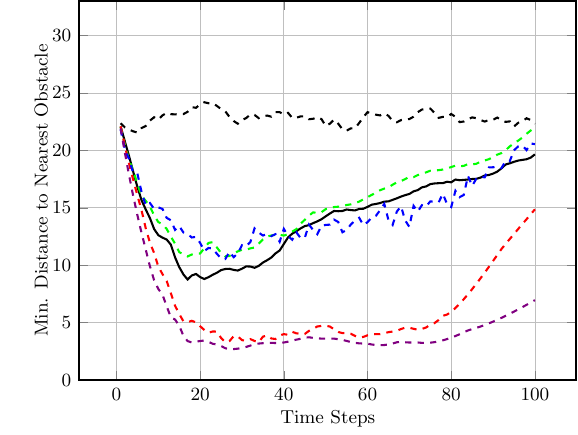}
    }%
    \subfigure[Distance to Goal]{
        \hspace{-5pt}
        \label{fig:scenario_B4}
        \centering
        \includegraphics[width=0.22\textwidth]{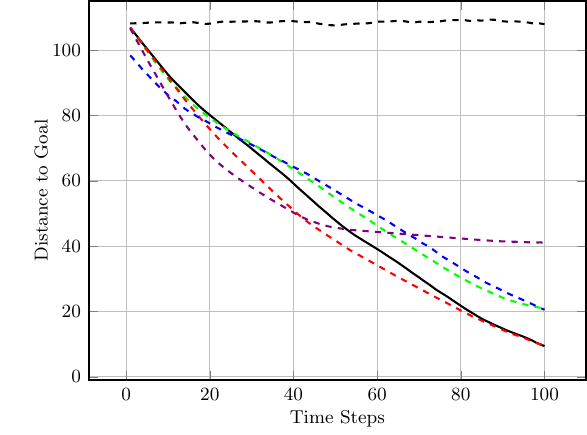}
    }
    \vneg
    \caption{Results for the Area Obstacles scenario. [Best viewed in color.]}
    \label{fig:scenario_B_results}
\end{figure*}

\begin{figure*}[h]
    \centering
    \subfigure[Cohesion]{
        \hspace{-12pt}
        \label{fig:scenario_C1}
        \centering
        \includegraphics[width=0.215\textwidth]{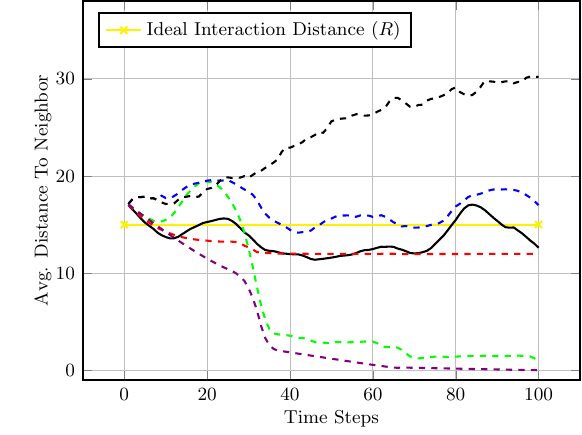}
    }%
    \subfigure[Connectivity]{
        \hspace{-5pt}
        \label{fig:scenario_C2}
        \centering
        \includegraphics[width=0.22\textwidth]{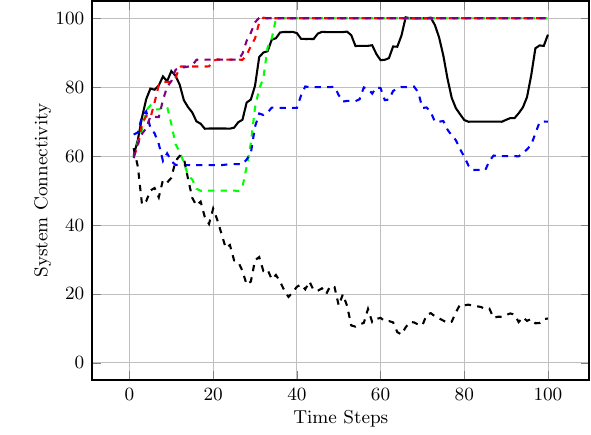}
    }%
    \subfigure[Nearest Obstacle]{
        \hspace{-5pt}
        \label{fig:scenario_C3}
        \centering
        \includegraphics[width=0.215\textwidth]{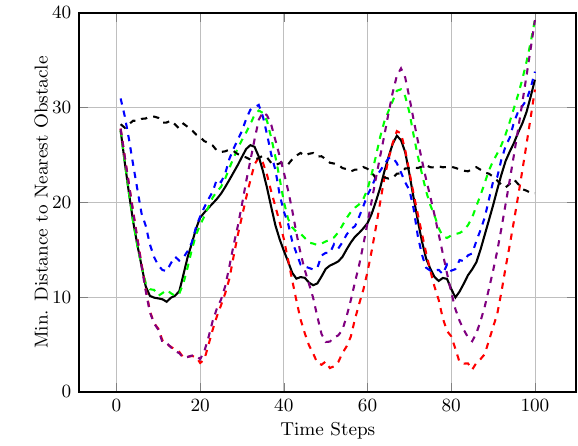}
    }%
    \subfigure[Distance to Goal]{
        \hspace{-5pt}
        \label{fig:scenario_C4}
        \centering
        \includegraphics[width=0.22\textwidth]{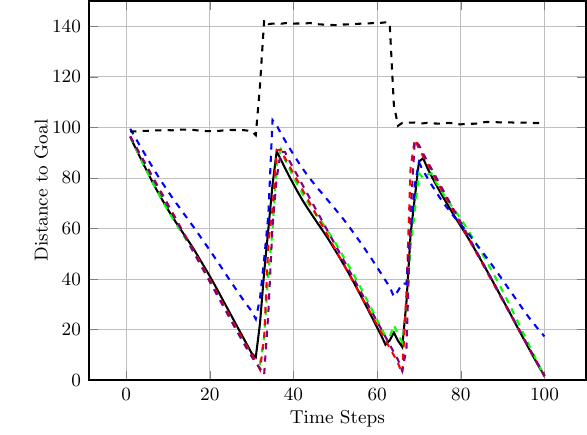}
    }
    \vneg
    \caption{Results for the Dynamic Goal scenario. [Best viewed in color.]}
    \label{fig:scenario_C_results}
\end{figure*}

Figure \ref{fig:scenario_A3} shows the average nearest obstacle to
any robot.
Here, we see that our $\beta_3 = 0$ approach performs poorly for the
first time, with it and the greedy approach moving very near to
obstacles.
Our full approach and our $\beta_2 = 0$ approach perform well, as
they value distance from obstacles.
Finally, Figure \ref{fig:scenario_A4} shows the distance to
the goal over time for each approach.
Only the random approach fails to make progress towards the goal.
Overall, we see that our full approach performs the best, making
progress towards the goal while staying close to the ideal
system cohesion, maintaining
connectivity, and avoiding obstacles.
Each baseline version performs poorly somewhere, with $\beta_2 = 0$
maintaining a poor interaction distance and connectivity and 
$\beta_3 = 0$ navigating much closer to obstacles.

\subsection{Evaluation in Area Obstacles}

We next evaluated the Area Obstacles scenario seen in Figure
\ref{fig:scenario_B}, in which large square
obstacles lie between the multi-robot system and the goal position.
These larger obstacles interfere with robot movement much more than
point obstacles, and so cause larger variation in evaluation.
First, in Figure \ref{fig:scenario_B1} we see that our approach is
the only one capable of ending a simulation with robots distributed
at the ideal cohesion distance.
The alternate versions of our approach result in robots grouped much
too closely together, while the greedy approach ends with robots
spaced too far apart, as robots become stuck navigating through the
large obstacles.
Figure \ref{fig:scenario_B2} shows a similar result when considering
the connectivity status.
All approaches struggle to maintain a fully connected system as the 
robots move through the large obstacles.
Our approach and its variations perform the best, ensuring the multi-robot
system stays together instead of separating around obstacles and
resulting in the highest connectivity scores.
The greedy approach, which attains high connectivity through simple point
obstacles, has much more trouble maintaining connectivity as robots
become stuck without an algorithm to plan navigational behaviors 
around the large obstacles.

In Figure \ref{fig:scenario_B3} we see the distance to the nearest
obstacle as robots progress through the environment.
As with the Obstacle Field scenario, our full approach and the
$\beta_2 = 0$ approach are able to maintain significant distance from
the large obstacles, showing the importance of the obstacle avoidance
term in the cost function.
Again we see the $\beta_3 = 0$ and greedy approaches
result in the lowest average distance to an obstacle, indicating a high
risk of being stuck behind an obstacle or nearing a collision.
Finally, in Figure \ref{fig:scenario_B4} we see that our approach
and its variations are the only approaches able to consistently
reach a goal position through the Area Obstacles.
These approaches, because of their ability to plan actions in the
context of their environments and their neighbors, allow the multi-robot
system to effectively navigate the large obstacles.
The greedy approach, as can be expected by its poor maintenance of
system cohesion and its nearness to obstacles, regularly fails to navigate
the entire system to the goal position.

\subsection{Evaluation with Dynamic Goal}

Finally, we evaluate the scenario seen in Figure \ref{fig:scenario_C}
in which there is a Dynamic Goal, as if the multi-robot system was 
tasked with following waypoints or a human through an environment.
We see similar results to earlier scenarios.
The naive greedy approach causes robots to bunch together, maintaining
cohesion but making the system at risk for collisions.
Our baseline approaches again each struggle in the cost areas that their 
hyperparameters omit.
When $\beta_2 = 0$, we see that the system maintains connectivity
at the expense of allowing robots to become dangerously close together.
When $\beta_3 = 0$, we see that the system maintains a safe distance
between robots but risks collisions with obstacles.

However, our full approach again balances the evaluated metrics.
It is able to maintain an ideal interaction distance among robots, 
keeping them at a safe distance to avoid collisions but still achieving
full connectivity.
This full approach is also able to maintain a safe distance from obstacles
in the environment, while still making progress towards
the waypoints.

\subsection{Robustness to Noise}

In order to understand how our approach would
perform with the noise inherent in physical robot operations
(i.e., uncertainty about the positions of detected neighbors
and obstacles), we conduct evaluations with locations in
the environment modified with $10\%$
sensor noise.
Results with this added noise can be seen in blue
in Figures \ref{fig:scenario_A_results}-\ref{fig:scenario_C_results}.
We can see that our approach does lose performance when noise
is added, but is still able to outperform both of our
baseline methods and the greedy method.
Due to robots being able to play their games before
each action, errors in observation do not build up or persist
and cause long term issues.
Additionally, as our approach is decentralized, errors
from the added noise do not propagate through the system, and
are restricted to individual robots.
Accordingly, our approach is very robust to the sensor and movement
noise encountered in physical multi-robot systems.

\section{Conclusion}
\label{sec:conclusion}

In this paper, we propose a novel game theoretic method for decentralized
and communication-free multi-robot navigation.
Our approach is proven to identify strategies that approach a stable
Nash equilibrium, resulting in optimal behaviors for each
robot.
We show that our approach enables a multi-robot
system to navigate to a goal position,
with changes to hyperparameter values showing the importance of 
terms that encourage both avoiding obstacles and
collisions with neighboring robots and maintaining connectivity,
while remaining robust to errors from sensor noise.

\bibliographystyle{ieeetr}
\bibliography{references}

\end{document}